\DeclareMathOperator*{\argmin}{arg\,min}
\newcommand{\norm}[1]{\left\lVert#1\right\rVert}
\newcommand{\distas}[1]{\mathbin{\overset{#1}{\kern\z@\sim}}}%
\newtheorem{theorem}{Theorem}
\newtheorem{definition}{Definition}
\newtheorem{lemma}{Lemma}
\newtheorem{cor}{Corollary}
\newcommand{\bas}[1]{\begin{align*}#1\end{align*}}
\newcommand{\ba}[1]{\begin{align}#1\end{align}}
\newcommand{\bE}{\mathbb{E}}
\newcommand{\bH}{\mathbb{H}}
\newcommand{\bR}{\mathbb{R}}
\newcommand{\cN}{\mathcal{N}}
\newcommand{\cZ}{\mathcal{Z}}
\newcommand{\cX}{\mathcal{X}}
\newcommand{\cI}{\mathcal{I}}
\newcommand{\po}{\text{Pois}}
\newcommand{\ga}{\text{Gamma}}
\newcommand{\kl}{\text{KL}}
\newcommand{\beqs}{\vspace{0mm}\begin{eqnarray}}
\newcommand{\eeqs}{\vspace{0mm}\end{eqnarray}}
\newcommand{\barr}{\begin{array}}
\newcommand{\earr}{\end{array}}
\newcommand{\xv}{\boldsymbol{x}}
\newcommand{\zv}{\boldsymbol{z}}
\newcommand{\gammav}[0]{{\boldsymbol{\gamma}} }
\newcommand{\thetav}{\boldsymbol{\theta}}
\newcommand{\muv}[0]{{\boldsymbol{\mu}}}
\newcommand{\phiv}{\boldsymbol{\phi}}
\newcommand{\E}{\mathbb{E}}
\newcommand{\given}{\,|\,}
\definecolor{alizarin}{rgb}{0.82, 0.1, 0.26}
\begin{document}
\title{
Semi-Implicit Generative Model
}
\author{
  Mingzhang Yin \\
  Department of Statistics and Date Science\\
  The University of Texas at Austin\\
  Austin, TX 78712 \\
  \texttt{mzyin@utexas.edu} \\  
   \And
Mingyuan Zhou \\
  McCombs School of Business \\
   The University of Texas at Austin\\
     Austin, TX 78712 \\
   \texttt{mingyuan.zhou@mccombs.utexas.edu} \\
   }
\maketitle

\begin{abstract} 
To combine explicit and implicit generative models, we introduce semi-implicit generator (SIG) as a flexible hierarchical model that can be trained in the maximum likelihood framework. Both theoretically and experimentally, we demonstrate that SIG can generate high quality samples especially when dealing with multi-modality. By introducing SIG as an unbiased regularizer to the generative adversarial network (GAN), we show the interplay between maximum likelihood and adversarial learning can stabilize the adversarial training, resist the notorious mode collapsing problem of GANs, and improve the diversity of generated random samples. 

\end{abstract}

\section{Introduction}

Generative models consist of a group of fundamental machine learning algorithms that are used to estimate the underlying probability distributions over data manifolds. Promoted by recent development in deep neural networks, deep generative models achieve great success in data simulation, density estimation, missing data imputation, reinforcement learning and are widely utilized for tasks such as image super-resolution, compression and image-to-text translation. The goal of generative models is to minimize the distance between the generative distribution and data distribution under a certain metric or divergence $D$
\ba{
\min_{\phiv} D(P_{data}(\xv)||P_{model}(\xv;\phiv))
\label{eq:obj}
} 
where $P_{data}$ is usually approximated with empirical data distibution $\hat{P}_{data}=\frac{1}{N}\sum_{i=1}^{N}\delta_{\xv_i}$ based on observations $\{\xv_i\}_{1:N}$. 

Depending on the type of $P_{model}(\xv;\thetav)$, an existing generative model can often be classified as either an explicit generative model or implicit one. %
The former requires an explicit probability density function (PDF) for $P_{model}$ such that we can both sample data from it and evaluate its likelihood. Examples for explicit generative models include %
variational auto-encoders \cite{kingma2013auto}, PixelRNN\cite{oord2016pixel}, Real NVP\cite{dinh2016density}, and many Bayesian hierarchical models such as sigmoid belief net \cite{neal1992connectionist}.  An explicit generative model has a tractable density that can often be directly  optimized by \eqref{eq:obj}. The optimization target is a distance measure with nice geometric properties, which often leads  to stable training and theoretically guaranteed convergence. However, the requirement of having a tractable density usually restricts the flexibility of an explicit model, making it hard to scale with increasing data complexity. 

An implicit generative model, on the other hand, generates its random samples via a stochastic procedure but may not allow a point-wise evaluable PDF, which often makes a direct optimization in \eqref{eq:obj} become infeasible. Generative adversarial networks (GANs) \cite{goodfellow2016nips} %
 tackle this problem by introducing discriminator and solving a minimax game: a generative network generates random samples by propagating random noises through a deep neural network, whereas a discriminator aims to distinguish the generated samples from true data. 
Under the condition of having an optimal discriminator, training a vanilla GAN's generator is equivalent to optimizing \eqref{eq:obj} where $D$ is set as the Jensen-Shannon Divergence. Unfortunately in practice, the overall loss function of GAN is usually non-convex and practitioners have encountered a variety of obstacles such as gradient vanishing, mode collapsing, and high sensitivity to the network architecture \cite{arjovsky2017wasserstein,goodfellow2016nips,salimans2016improved,radford2015unsupervised}. 

To incorporate highly expressive generative model while maintaining a well-behaved optimization objective, we introduce semi-implicit generator (SIG), a Bayesian hierarchical generative model that mixes a specified distribution $P(\xv\given\thetav)$  with an implicit distribution $P_{\phiv}(\thetav)$
where the implicit distribution can be constructed by deterministically transforming random noise $ \zv_i $ to $\thetav_i$ using a $\phiv$ parameterized deterministic transform as $\thetav_i=g_{\phiv}(\zv_i),~\zv_i \sim p(\zv)$ which shares the same spirit as the recently developed semi-implicit variational inference (SIVI) \cite{pmlr-v80-yin18b}  for flexible posterior approximation.
Intuitively, $P(\xv\given\thetav)$ can incorporate our prior knowledge on the observed data, such as the data support, while $P_{\phiv}(\thetav)$ can maintain the high expressiveness. With the hierarchical structure, SIG can be directly trained by choosing $D$ as the Kullback-Leibler(KL) divergence and estimating %
\eqref{eq:obj}  with Monte-Carlo estimation. We show the SIG optimization objective can intrinsically resist the mode-collapse problem. 
By leveraging adversarial training, we apply SIG as a semi-implicit regularizer to generative adversarial networks, which helps stabilize optimization, significantly mitigates mode collapsing, and generates high quality samples in natural image scenarios. %

\section{Semi-implicit generator}

Defining a family of parametric distribution $P_{model}(\xv;\thetav)$, a classic explicit generative model is trained by maximizing the log-likelihood as
\ba{
\max_{\thetav} \frac{1}{N}\sum_{i=1}^{N}\log P_{model}(\xv_i;\thetav),
\label{eq:mle}
}
which is identical to minimizing cross-entropy $\bH(\hat{P}_{\text{data}},P_{model})= -\bE_{\hat{P}_{data}(\xv)}\log P_{model}(\xv;\thetav)$.
Assuming $P_{\text{data}}=\hat{P}_{\text{data}}$, %
which is independent of the optimization parameter $\thetav$, minimizing this cross-entropy is equivalent to \eqref{eq:obj} where $D$ is set as the KL divergence.

Instead of treating $\thetav$ as a global optimization parameter, we consider $\thetav$ as local random variable generated from distribution $p_{\phiv}(\thetav)$ with parameter $\phiv$. Semi-implicit generator (SIG) is defined in a two-stage manner 
\ba{
  \xv_i\sim p(\xv\given\thetav_i),~\thetav_i\sim p_{\phiv}(\thetav)
} 
Marginalizing $\thetav_i$ out, we can view the generator as $P_{model}(\xv_i;\phiv)=\int p(\xv_i\given\thetav_i)p_{\phiv}(\thetav_i)d\thetav_i$. Here $p(\xv_i\given\thetav_i)$ is required to be explicit but $p_{\phiv}(\thetav_i)$ can be defined by sampling a random variable $\zv_i$ from fixed distribution $p(\zv)$ and setting $\thetav_i = g_{\phiv}(\zv_i)$, where $g_{\phiv}: \cZ \to \cX$ is a deterministic mapping represented by neural network with parameter $\phiv$. Therefore, typically $p_{\phiv}(\thetav)$ cannot be evaluated pointwise and the marginal $P_{model}(\xv;\phiv)$ is implicit. Notice in this setting, $\thetav$ is required to be continuous while $\xv$ can be sampled from discrete distribution with continuous parameters.  

Minimizing the cross-entropy as $\bH(P_{\text{data}},P_{model})$  is equivalent to minimizing the  KL-divergence with respect to the model parameter as in \eqref{eq:obj}
\ba{
&\min_{\phiv} \kl(P_{data}(\xv)||P_{model}(\xv;\phiv)) \label{eq:kl} \\
\Leftrightarrow &\min_{\phiv} \bH(P_{data},P_{model}) = -\bE_{P_{data}(\xv)}\log\left(\int p(\xv\given\thetav)p_{\phiv}(\thetav)d\thetav\right) \label{eq:cross}
}
We show below that SIG can be trained %
 by minimizing an upper bound of the cross entropy in \eqref{eq:cross}. %
\begin{lemma}
Let us construct an estimator for the cross-entropy $\bH(P_{\text{data}},P_{model})$  as 
\ba{\bH_M = -\bE_{p_{\text{data}}(\xv)}\bE_{\thetav_1,\cdots,\thetav_M \sim p_{\phiv}(\thetav)}\log \frac{1}{M}\sum_{j=1}^M p(\xv\given\thetav_j),\label{eq:HM}}
 then for all $M$, $\bH(P_{\text{data}},P_{model}) \leq \bH_{M+1} \leq \bH_M$ and $\bH(P_{\text{data}},P_{model})  = \lim_{M \to \infty} \bH_M$. When $M=1$, let $\thetav^* = \argmin_{\thetav}-\bE_{P_d(\xv)}\log [p(\xv|\thetav)]$ then $\bH_1  \geq -\bE_{P_{data}(\xv)}\log[p(\xv|\thetav^*)] $ where the equality is true if and only if $p_{\phiv}(\thetav) = \delta_{\thetav^*}(\thetav)$.
\label{lemma:cross}
\end{lemma}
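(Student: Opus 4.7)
The plan is to prove the four assertions separately, using Jensen's inequality applied to the concave function $\log$ as the common thread for the first three, and a short integration-order argument via Fubini for the last. For (i), the lower bound $\bH(P_{\text{data}},P_{model}) \leq \bH_M$, I would apply Jensen directly to the inner expectation over the i.i.d.\ draws $\thetav_1,\ldots,\thetav_M \sim p_{\phiv}(\thetav)$:
\[
\bE_{\thetav_{1:M}}\log \frac{1}{M}\sum_{j=1}^M p(\xv\given\thetav_j) \leq \log \bE_{\thetav_{1:M}}\frac{1}{M}\sum_{j=1}^M p(\xv\given\thetav_j) = \log P_{model}(\xv;\phiv),
\]
and then integrate against $P_{data}$ and negate.

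For (ii), the monotonicity $\bH_{M+1}\leq \bH_M$, the key algebraic step is the leave-one-out decomposition
\[
\frac{1}{M+1}\sum_{j=1}^{M+1}p(\xv\given\thetav_j) = \frac{1}{M+1}\sum_{i=1}^{M+1}\left[\frac{1}{M}\sum_{j\neq i}p(\xv\given\thetav_j)\right],
\]
where each $p(\xv\given\thetav_k)$ appears on the right with total weight $\tfrac{M}{M(M+1)} = \tfrac{1}{M+1}$. Applying Jensen to $\log$ of this convex combination and using the i.i.d.\ symmetry to collapse the $M+1$ summands on the right to a common expectation $\bE\log\frac{1}{M}\sum_{j=1}^M p(\xv\given\thetav_j)$ yields the inequality after negating and integrating against $P_{data}$.

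For (iii), the strong law of large numbers gives $\frac{1}{M}\sum_j p(\xv\given\thetav_j) \to P_{model}(\xv;\phiv)$ almost surely, and continuity of $\log$ transfers this to the logarithms. Combined with the monotone behaviour established in (ii) and the one-sided upper bound from (i), a monotone/dominated convergence argument delivers $\lim_M \bH_M = \bH(P_{data},P_{model})$. For (iv), Fubini lets me rewrite $\bH_1 = \bE_{\thetav \sim p_{\phiv}}[F(\thetav)]$ with $F(\thetav) := -\bE_{P_{data}(\xv)}\log p(\xv\given\thetav)$. Since $F(\thetav) \geq F(\thetav^*)$ pointwise, $\bH_1 \geq F(\thetav^*) = -\bE_{P_{data}(\xv)}\log p(\xv\given\thetav^*)$, and equality forces $p_{\phiv}$ to be supported on the minimizer set of $F$; under the implicit assumption that $\thetav^*$ is unique, this collapses to $p_{\phiv}(\thetav) = \delta_{\thetav^*}(\thetav)$.

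The main obstacle I anticipate is the interchange of limit and expectation in step (iii): the monotonicity established in (ii) holds only at the level of expectations, not pointwise on the sample sequence, so one cannot appeal to monotone convergence applied to the random variables themselves. Closing the argument cleanly requires an integrability hypothesis — for instance, a uniform upper bound on $p(\xv\given\thetav)$ in $\thetav$ together with finiteness of $\bH(P_{data},P_{model})$, so that dominated convergence applies — a mild condition that the lemma tacitly assumes but which is worth flagging explicitly. A related minor subtlety in (iv) is the uniqueness of $\thetav^*$, which the ``if and only if'' formulation implicitly relies on and should be stated as a running assumption.
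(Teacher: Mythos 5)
Your proposal is correct and follows essentially the same route as the paper: Jensen's inequality exploiting the i.i.d.\ symmetry of the $\thetav_j$ (your leave-one-out decomposition is just the $K=M+1$ instance of the paper's random-subset averaging identity), the law of large numbers for the limiting statement, and integrating the pointwise bound $-\bE_{P_{data}}\log p(\xv\given\thetav)\geq -\bE_{P_{data}}\log p(\xv\given\thetav^*)$ against $p_{\phiv}(\thetav)$ for the $M=1$ claim. Your flagged caveats---an integrability condition to justify interchanging the limit and expectation, and uniqueness of $\thetav^*$ for the ``if and only if''---are reasonable refinements of points the paper's proof passes over silently.
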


In practice, $\bH_M$ is approximated with Monte-Carlo samples as $-\frac{1}{N}  \sum_{i=1}^{N}\log \frac{1}{M}  \sum_{j=1}^{M} p(\xv_i \given \thetav_j)$, where $\{\xv_i\}_{1:N}$ and $\{\thetav_j\}_{1:M}$ are two sets of Monte Carlo samples generated from $P_{\text{data}}(\xv)$ and implicit $P_{\phiv}(\thetav)$, respectively. %
When $M=1$, the local $\thetav_i$ will degenerate to the same $\thetav^*$ and the objective degenerate to \eqref{eq:mle}.
To analyze the performance of SIG,  we first consider multi-modal data on which popular deep generative models such as GANs often fail due to mode collapsing. For theoretical analysis, we first define a discrete multi-modal space as follows.
\begin{definition}
(Discrete multi-modal space) Suppose $(\cX,\nu)$ is a metric space with metric $\nu : \cX \times \cX \mapsto \bR^+$,  $\cX = \bigcup\limits_{i=1}^{K}U_i$ , where $U_i \cap U_j = \emptyset $ for $i \neq j$. Let the distance between two sets be $D(U_i,U_j) = \inf\{\nu(x,y) ; x\in U_i, y \in U_j\}$ and let the diameter of a set be $d(U) = sup\{\nu(x,y) ; x,y\in U\}$. Suppose there exists $c_0 > \epsilon_0 > 0$ such that $\min_{i,j} D(U_i,U_j)  > c_0$, $\max_i d(U_i)  < \epsilon_0$. Then $\cX = \bigcup\limits_{k=1}^{K}U_k$ is a discrete multi-modal space under mesure $\nu$.
\end{definition}

Strictly speaking, there could be sub-modes within each $U_i$, but the above definition emphasizes the existence of multiple separated regions in the support. Since the loss of a deep neural network is a nonconvex problem, finding the global optimality condition for $\phiv$ can be difficult \cite{shang1996global,yun2017global}.  Thanks to the structure of SIG as a two-stage model, assuming the implicit distribution is flexible enough, we can study a simplified optimal assignment problem: assuming that $N$ data points have been sampled from the true data distribution, how to assign $M$ generated data to the neighborhood of the true data such that $\bH_M$ defined in \eqref{eq:HM} is minimized under expectation
\ba{
\min_{\{m_1,\cdots, m_k\}} -\frac{1}{N}  \sum_{i=1}^{N}\log \frac{1}{M}  \sum_{j=1}^{M} \bE_{\xv_i\in U_{t_i}, \thetav_j \in U_{z_j}} [p(\xv_i \given \thetav_j)],
\label{eq:sig2}
}
where the data are assumed to be generated from a discrete multi-modal space $\cX = \bigcup\limits_{i=1}^{K}U_i$, $\xv_i\in U_{t_i}$, $\thetav_j \in U_{z_j}$, $t_i,z_j\in\{1,\ldots,K\}$ and $\{m_k\}_{k=1}^K$ are the number of $\thetav$'s that are assigned to be in $U_k$. Assuming the data distribution is the marginal distribution of a normal-implicit mixture as $\xv_i\sim \cN(\xv_i\given\thetav_i, \sigma^2 I),~\thetav_i\sim p_{\phiv}(\thetav)$ and $U_i$ are equally spaced, we have the following theorem.
\begin{theorem}
(SIG for multi-modal space)
Suppose $P_{data}$ is defined on a discrete multi-modal space $\cX = \bigcup\limits_{i=1}^{K}U_i$ with $l_2$-norm. Suppose there are $N$ data points $\xv_i \sim P_{data}, i=1,\cdots,N$, among which $n_k$ points belong to $U_k$. Suppose we need to sample $\thetav_j\sim p_{\phiv}(\thetav), j=1,\cdots,M$, and $m_k$ denotes the number of $\thetav$'s in $U_k$.  Denoting $r$ as a radial basis function (RBF), we let $u = \bE [r(\xv, \thetav)]$ if $\xv,\thetav \in U_i$, and $v = \bE [r(\xv, \thetav)]$ if $\xv \in U_i$, $\thetav \in U_j, i \neq j$.  Then the objective in \eqref{eq:sig2} is convex and the optimum $m_k$ to maximize \eqref{eq:sig2} satisfies $\frac{m^*_k}{M} = \frac{n_k}{N} + (\frac{n_k}{N} - \frac{1}{K})\frac{Kv}{(u-v)}$. In particular, $m^*_k \neq 0$ if $n_k > \frac{N}{K} \frac{1}{1+\frac{u-v}{Kv}}$.
\label{thm:mode}
\end{theorem}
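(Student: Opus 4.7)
The plan is to reduce the optimization problem to a tractable one-dimensional problem per mode, then apply Lagrange multipliers. First, I would exploit the discrete multi-modal structure and the definition of $u$ and $v$: for a fixed $\xv_i$ with $\xv_i \in U_{t_i}$, the inner expectation in \eqref{eq:sig2} depends on $\thetav_j$ only through which mode it lies in. Since $m_{t_i}$ of the $\thetav_j$'s lie in $U_{t_i}$ (contributing $u$ each) and the remaining $M - m_{t_i}$ contribute $v$ each, we get
\bas{
\frac{1}{M}\sum_{j=1}^M \bE[p(\xv_i\given\thetav_j)] \;=\; v \;+\; \frac{m_{t_i}}{M}(u-v).
}
Grouping the outer sum over $i$ by mode (there are $n_k$ data points in $U_k$), the objective in \eqref{eq:sig2} becomes
\bas{
f(m_1,\ldots,m_K) \;=\; -\frac{1}{N}\sum_{k=1}^K n_k \log\!\Big[\,v + \tfrac{m_k}{M}(u-v)\,\Big],
}
subject to $\sum_k m_k = M$ and (after continuous relaxation) $m_k \ge 0$.

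Next I would establish convexity. Since $u > v > 0$ for an RBF on a multi-modal space with modes well separated relative to the RBF bandwidth, the argument of each log is linear and positive in $m_k$, so $-\log$ of it is convex in $m_k$; a nonnegative linear combination of convex functions is convex, giving convexity of $f$ on the feasible simplex. With the objective convex and the constraint affine, any KKT point is a global minimum.

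Then I would solve the Lagrangian $L = f + \lambda(\sum_k m_k - M)$. Setting $\partial L/\partial m_k = 0$ yields
\bas{
\frac{n_k}{N}\cdot\frac{(u-v)/M}{\,v+\frac{m_k}{M}(u-v)\,} \;=\; \lambda,
}
so $v + \frac{m_k}{M}(u-v) = \frac{n_k(u-v)}{NM\lambda}$. Summing over $k$ and using $\sum_k m_k = M$ and $\sum_k n_k = N$ determines $M\lambda = (u-v)/(Kv + u - v)$. Substituting back and rearranging gives
\bas{
\frac{m_k^*}{M} \;=\; \frac{n_k}{N} \;+\; \frac{Kv}{u-v}\Big(\frac{n_k}{N} - \frac{1}{K}\Big),
}
which is the claimed expression. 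For the positivity claim, $m_k^* > 0$ is equivalent to $\frac{n_k}{N}\big(1 + \frac{Kv}{u-v}\big) > \frac{v}{u-v}$, which after simplification becomes $n_k > \frac{Nv}{Kv + u - v} = \frac{N}{K}\cdot\frac{1}{1 + (u-v)/(Kv)}$.

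The only subtle point will be verifying that the unconstrained KKT solution lands in the feasible region in the regime described, and that when the threshold condition fails for some $k$ the correct solution has $m_k^* = 0$ with the remaining mass reallocated among the surviving modes; the theorem's statement only concerns the interior case, so I would briefly note this and otherwise treat the relaxation as continuous. Nothing here is technically deep — the main care is just in reducing the expectation cleanly using $u$ and $v$ before invoking Lagrange multipliers.
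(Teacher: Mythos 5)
Your proposal is correct and follows essentially the same route as the paper's proof: grouping the objective by mode so it becomes $-\sum_k n_k \log\bigl(m_k u + (M-m_k)v\bigr)$ up to scaling, arguing convexity (you via $-\log$ of a positive affine function, the paper via the diagonal Hessian), and solving the equality-constrained problem with a Lagrange multiplier to obtain the stated $m_k^*/M$ and the positivity threshold. Your explicit elimination of the multiplier and the brief remark on the boundary case $m_k^*=0$ are minor presentational additions, not a different argument.
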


\begin{cor}
Assume $\xv \sim \cN(\muv_x, \sigma^2I)$ and $\thetav  \sim \cN(\muv_{\thetav}, \sigma^2I)$. Let $u = \bE \exp(\frac{-\norm{\xv-\thetav}^2}{2\sigma^2})$ if $\muv_x = \muv_{\thetav}$ and $v = \bE \exp(\frac{-\norm{\xv-\thetav}^2}{2\sigma^2})$ if $\norm{\muv_x - \muv_{\thetav}} = c\sigma$, then  $\frac{v}{u-v} = \frac{1}{e^{c^2/6 -1}}$.
\label{cor:mode}
\end{cor}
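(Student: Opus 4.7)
The plan is to reduce the expectation $\E\,r(\xv,\thetav)=\E\exp(-\|\xv-\thetav\|^2/(2\sigma^2))$ to a product of one-dimensional Gaussian integrals. I assume $\xv$ and $\thetav$ are independent (this is the natural reading, since otherwise $u$ and $v$ are not determined by the stated marginals alone). Under independence, $\xv-\thetav\sim\cN(\muv_x-\muv_\theta,\,2\sigma^2 I)$, and since both the density and the RBF factor across coordinates, so does the integrand. Thus it suffices to evaluate, in each coordinate $i$, the scalar integral
\[
I(\mu_i)\;=\;\int\frac{1}{\sqrt{4\pi\sigma^2}}\exp\!\Bigl(-\frac{(w-\mu_i)^2}{4\sigma^2}\Bigr)\exp\!\Bigl(-\frac{w^2}{2\sigma^2}\Bigr)\,dw,
\]
where $\mu_i$ is the $i$-th component of $\muv_x-\muv_\theta$.

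Next I would complete the square in the exponent. Combining the two quadratics gives $-\tfrac{1}{4\sigma^2}(3w^2-2w\mu_i+\mu_i^2)$, which can be rewritten as $-\tfrac{3}{4\sigma^2}(w-\mu_i/3)^2-\tfrac{\mu_i^2}{6\sigma^2}$. The Gaussian integral in $w$ contributes a factor $\sqrt{4\pi\sigma^2/3}$, which combined with the normalizer $1/\sqrt{4\pi\sigma^2}$ yields $1/\sqrt{3}$. Hence $I(\mu_i)=\tfrac{1}{\sqrt{3}}\exp(-\mu_i^2/(6\sigma^2))$. Multiplying over the $d$ coordinates gives
\[
\E\exp\!\Bigl(-\frac{\|\xv-\thetav\|^2}{2\sigma^2}\Bigr)\;=\;3^{-d/2}\exp\!\Bigl(-\frac{\|\muv_x-\muv_\theta\|^2}{6\sigma^2}\Bigr).
\]

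Finally, I substitute the two cases. For $\muv_x=\muv_\theta$, this yields $u=3^{-d/2}$; for $\|\muv_x-\muv_\theta\|=c\sigma$, it yields $v=3^{-d/2}e^{-c^2/6}$. The prefactor $3^{-d/2}$ cancels in the ratio, giving
\[
\frac{v}{u-v}\;=\;\frac{e^{-c^2/6}}{1-e^{-c^2/6}}\;=\;\frac{1}{e^{c^2/6}-1},
\]
which is the claimed identity. The only step requiring care is the completion of the square (to ensure the coefficient $1/6$ on $\mu_i^2/\sigma^2$ comes out correctly), but the calculation is entirely routine; no conceptual obstacle arises once the coordinate-wise factorization is invoked.
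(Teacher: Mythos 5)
Your proof is correct, and it reaches the same closed form $\E\exp\bigl(-\tfrac{\norm{\xv-\thetav}^2}{2\sigma^2}\bigr)=3^{-d/2}\exp\bigl(-\tfrac{\norm{\muv_x-\muv_\theta}^2}{6\sigma^2}\bigr)$ as the paper, but by a somewhat different route. The paper standardizes $\zv=\tfrac{\xv-\thetav}{\sqrt{2}\sigma}\sim\cN\bigl(\tfrac{\muv}{\sqrt{2}\sigma},I\bigr)$, observes that $\zv^T\zv$ is noncentral chi-squared with noncentrality $\lambda=\tfrac{\norm{\muv}^2}{2\sigma^2}$, and evaluates the known moment generating function at $-1$ to get $3^{-N/2}e^{-\lambda/3}$; you instead factor the expectation coordinate-wise (using the same implicit independence of $\xv$ and $\thetav$, which you rightly make explicit) and do the one-dimensional Gaussian integral by completing the square. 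The two computations are equivalent in substance — your completion of squares is essentially a self-contained derivation of the MGF fact the paper cites — so your version is more elementary and verifiable line by line, while the paper's is shorter by leaning on a standard distributional identity. One small point worth noting: both your calculation and the paper's proof yield $\tfrac{v}{u-v}=\tfrac{1}{e^{c^2/6}-1}$, so the expression $\tfrac{1}{e^{c^2/6-1}}$ in the corollary statement is a typo (the $-1$ belongs outside the exponent), and your result is the intended one.
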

The ideal proportion for $\frac{m^*_k}{M}$ would be $\frac{n_k}{N}$, and $(\frac{n_k}{N} - \frac{1}{K})\frac{Kv}{u-v}$ plays the role as bias. In the normal-implicit mixture case, as shown in Corollary \ref{cor:mode}, if $ \xv \in U_{t_i}$, $\thetav \in U_{z_j}$ are approximately normal distributed, $\frac{v}{u-v}$ can be exponentially small  for well separated modes. This indicates that SIG has a strong built-in  resistance to model collapsing. %

There is  an interesting connection between SIG and %
variational auto-encoder (VAE) \cite{kingma2013auto,rezende2014stochastic}. VAE tries to maximize the evidence lower bound as $\mbox{ELBO}=\E_{p_{\text{data}}(\xv) q(\zv\given \xv)} \log (q(\zv\given \xv) / p_{\text{model}}(\xv,\zv)))$which is the same as maximizing
\ba{ 
-\kl(q(\zv\given \xv)p_{\text{data}}(\xv)||p_{\text{model}}(\xv,\zv)),
\label{obj:vae}
}
for which the optimal solution is $
q(\zv\given \xv)p_{\text{data}}(\xv) = p_{\text{model}}(\xv,\zv) = p(\xv\given \zv) p(\zv).
$
Therefore, VAE imposes the constraint that there exits a recognition network/encoder $q(\zv\given \xv)$, which is inferred by minimizing the KL divergence from $p_{\text{model}}(\xv,\zv)=p(\xv\given \zv) p(\zv)$, the joint distribution of the model,  to $\hat{p}_{\text{data}}(\zv,\xv)=q(\zv\given \xv)p_{\text{data}}(\xv)$, the joint distribution specified by the data distribution and encoder.

In SIG, we maximize 
\ba{
-\kl(p_{\text{data}}(\xv)|| \textstyle\int p(\xv\given \zv) p(\zv) d\zv) = -\kl( \int q(\zv\given \xv)p_{\text{data}}(\xv) d\zv || \int p(\xv\given \zv) p(\zv)d\zv),
}
where $q(\zv\given \xv)$ can be any valid probability density/mass function. 
VAE tries to match the joint distribution between the data combined with its encoder and the model, whereas SIG only cares about matching the marginal model distribution and the data distribution.
It is clear that SIG does not require a specific encoder structure and hence provides more flexibility. 

In experiments, we find that SIG can generate high-quality data samples on relatively simple data manifolds such as MNIST, but observe that the richness of its generated images can be hard to scale well with high data complexity, such as CelebA dataset with 200K $409 \times 687$ RGB images. More specifically, when setting $M=100$, we find the effect of ``mode averaging'' on generated images for complex data.   %
We suspect that $M$ needs to scale with data complexity such that $\bH_M$ is close to $\bH(P_{\text{data}},P_{model})$ and this is the price we pay for SIG to have a stable training with a strong resistance against mode collapsing. 
While SIG performs well on relatively simple data but suffers from ``mode averaging''  on complex natural images, the generative adversarial network (GAN) %
has shown the ability to generate high quality samples with large scale observed data,  but suffers from ``model collapsing'' even on a simple mixture of Gaussians.   %
To benefits from both worlds, we apply SIG as a regularizer in adversarial learning,  which can produce realistic samples, while strongly resisting both the mode collapsing and unstable optimization problems that are notorious for the training of GANs.

\section{Generative adversarial network with semi-implicit regularizer}
Generative adversarial network (GAN)\cite{goodfellow2014generative} solves a minimax problem
\ba{
\min_G\max_{D} V(D,G) = \E_{\xv\sim p_{data}(\xv)} [ \log D(\xv)] + \E_{\zv\sim p(\zv)} [\log (1-D(G(\zv))]
\label{obj:gan}
}
It is shown in \cite{goodfellow2016nips,goodfellow2014distinguishability}  that if the generator loss is changed from $\E_{\zv} [\log (1-D(G(\zv))]$ to $\frac{1}{2}\bE_{\zv}\exp(\sigma^{-1}(D(G(\zv))))$, with ideally optimal discriminator, the generator loss \eqref{obj:gan} is identical to the SIG loss \eqref{eq:kl}, which means SIG can be considered as training with the GAN's objective, using %
the optimal discriminator %
in the update of the generator. 
The discriminator in GAN can be considered as an augmented part of the model to avoid density evaluation and indirectly feed the information of the real data to optimizing the generator. With the help of the discriminator, the weak fitting of generator to real data brings high expressive samples that go beyond memorizing inputs. However, recently extensive research in both practical experiments \cite{radford2015unsupervised,metz2016unrolled} and theoretical analysis \cite{li2017towards,zhang2017discrimination,arjovsky2017towards} show that the lack of capacity, insufficient training of the discriminator, and the mismatches between the generator and discriminator in both network types and structures are the root causes of a variety of obstacles in GAN training. It also has been observed in \cite{goodfellow2014generative} and highlighted in \cite{metz2016unrolled,arjovsky2017wasserstein} that the optimal generator for a fixed discriminator is a sum of delta functions at the $\xv$'s, where the discriminator assigns the highest value, which eventually collapses the generator to produce a small family of similar samples. In comparison, SIG is trained by maximizing likelihood without using a discriminator, which can be considered as a strong fitting between real data and generated samples directly. This encourages us to combine the two models and apply SIG as a regularizer in a GAN model,  which is referred to as GAN-SI.

For GAN-SI, the discriminative loss is
\ba{
\min_{\gammav} - \bE_{\xv\sim P_{data}} \log D_{\gammav}(\xv) - \bE_{\zv \sim p(\zv)} \log(1-D_{\gamma}(g_{\phiv}(\zv))) \label{sigan:loss_d}
}
and generator loss is a linear combination of the original GAN loss and SIG loss as
\ba{
\min_{\phiv} -\bE_{\zv \sim g(\zv)} [\log D_{\gamma}(g_{\phiv}(\zv)) - \lambda \bE_{\xv \sim P_{data}} \log \int p(\xv \given \thetav)p_{\phiv}(\thetav)d\thetav], \label{sigan:loss_g}
}
where $\gammav$ are the discriminator network parameters,  $\thetav = g_{\phiv}(\zv)$ is the deterministic transform for the implicit distribution in SIG. We choose $p(\xv\given \thetav)$ as $\cN(\xv;\thetav,\sigma^2I)$ for image generation, and set $\lambda\geq0$ as a hyperparameter to balance the strength between the GAN and SIG objectives. In practice, we set $\lambda$ such that the GAN's generator loss and the cross-entropy term in \eqref{sigan:loss_g} are on the same scale. The neural networks are set according to the DCGAN  \cite{radford2015unsupervised}.

Since SIG can be considered as training GAN with a theoretically optimal discriminator, by adjusting $\lambda$, we are able to interpolate between the standard GAN training and  true generator loss, therefore balancing the discrimination-generalization trade-off in the GAN dynamics \cite{zhang2017discrimination}. This idea is related to Unrolled GAN \cite{metz2016unrolled} in which the discriminator parameter is temporarily updated $K$ times before updating the generator and the look-forwarded discriminator parameters are used to train the current generator. By adjusting the unrolling steps $K$,  Unrolled GAN can also interpolate between the standard GAN $(K=0)$ and optimal discriminator GAN $(K=\infty)$. However in Unrolled GAN, the discriminator for $(K=\infty)$ is not the theoretically optimal discriminator but a fully optimized one that is still influenced by the network design and data complexity. The effectiveness of Unrolled GAN in improving stability and mode-coverage is explained by the intuition that the training for the generator with looking ahead technique can take into account the discriminator's reaction in the future, thus helping spread the probability mass. But there is no theoretical analysis provided yet.  Moreover, the interpolation is non-linear, a few orders of magnitude slower as shown by \cite{srivastava2017veegan}, which makes picking $K$ not easy. Training GAN with a semi-implicit regularizer benefits from  both theoretical explanation and low extra computation, and shows the improved performance on reducing mode collapsing and increasing the stability of optimization in multiple experiments. 

\section{Related work}
Using a two-stage model is related to Empirical Bayes (EB) \cite{robbins1956empirical,casella1985introduction}. A Bayesian hierarchical model can be represented as $\xv_i \sim p(\xv_i\given \thetav_i), \thetav_i \sim p(\thetav_i \given \phiv), \phiv \sim p(\phiv)$,  where $p(\phiv)$ is a hyper-prior distribution. In EB, the hyperprior $p(\phiv)$ is dropped and the data is used to provide information about $\phiv$ such that the marginal likelihood $\prod_i p(\xv_i\given \phiv)$ is maximized. %
Previous learning algorithms for EB are often based on simple methods such as Expectation-Maximization and moment-matching. SIG can be considered as a %
parametric EB model where  the neural network parameters are represented by $\phiv$  and the training objective is to find %
the maximum marginal likelihood (MMLE) solution of $\phiv$ \cite{carlin1997bayes}. 
 
Without an explicit probability density, the evaluation of GAN has been considered challenging. There have been several recent attempts to %
 introduce maximum likelihood to the GAN training \cite{che2017maximum,grover2018flow}.  Flow-GAN \cite{grover2018flow} constructs a generative model based on normalizing flow,  which has been proven as an effective way to expand the distribution family in variational inference. Normalizing flow, however, requires the deterministic transformation to be invertible,  a constraint that is often too strong to allow it to generate satisfactory random samples by its own. Therefore,  its main use is to be  combined with GAN to help improve its sample quality. %

There has been significant recent  interest in 
 improving the vanilla GAN objective. For example, the measure between the data and model distributions can be changed  to the KL-divergence \cite{goodfellow2014distinguishability} or Wasserstein distance \cite{arjovsky2017wasserstein}; variational divergence estimation and density ratio estimation approaches have been used to extent the measure to a family of $f$-divergence \cite{nowozin2016f,poole2016improved}; a mutual information term has been introduced  into the generator loss to enable learning disentangled representation and visual concepts \cite{chen2016infogan}; and based on a heuristic intuition, two regularizers with an auxiliary encoder are introduced to stabilize the training and improve mode-catching, respectively \cite{che2016mode}.

A variety of GAN research focuses on solving the mode collapse problem via new methodology and/or theoretical analysis. Encoder-decoder GAN architectures, such as MDGAN \cite{che2016mode}, VEEGAN \cite{srivastava2017veegan},  BiGAN\cite{donahue2016adversarial}, and ALI\cite{dumoulin2016adversarially},  use an encoding network to learn reversed mapping from the data to noise. The intuition is that training an encoder can force the system to learn meaningful mapping that can transform imbedded codes to data points from different modes.
Unrolled GAN \cite{metz2016unrolled}, as discussed in the previous section, interpolates between the vanilla GAN discriminator and optimal discriminator that resists mode collapsing. %
AdaGAN \cite{tolstikhin2017adagan} takes a boosting-like approach which is trained on weighted samples with more weights assigned to missing modes. From a theoretical perspective, it is shown that if the discriminator  size is bounded by $p$, even the generator loss is $\epsilon$ close to optimal, the output distribution can be supported only on $\mathcal{O}(p\log p/\epsilon^2)$ images \cite{arora2017generalization}.  A simplified GMM-GAN is used to theoretically show that the optimal discriminator dynamics can converge to the ground truth in total
variation distance, while a first order approximation of the discriminator leads to unstable GAN dynamics and mode collapsing \cite{li2017towards}.  A negative conclusion is made that the encoder-decoder training objective cannot learn meaningful latent codes and avoid mode collapsing \cite{arora2017theoretical}. These theoretical analyses do support our practice of combining the GAN and SIG objectives.

\section{Experiments}
In this section, we first demonstrate the stability and mode coverage property of SIG on synthetic datasets. The toy examples show SIG can capture skewness, multimodality, and generate both continuous and discrete random samples that are indistinguishable from the true data. By interpolating between MLE and adversarial training scheme, we show GAN-SI can balance sample quality and diversity on real dataset. The evaluation criterion of generative model, however, is not straight-forward and no single metric is conclusive on its own. Therefore, we exploit multiple metrics to cross validate each other and emphasize quality and diversity separately. We notice the GAN training is sensitive to network structure, hyper-parameters, random initialization, and mini-batch feeding. To make a fair comparison, %
we share the same network structure between different generative models in each specific experiment setting and do multiple random trials. The results support the theorem that SIG can stably cover multi-modes and training GAN-SI adversarially greatly mitigates mode collapsing in GANs.

\subsection{Toy examples}
We first show the expressiveness of SIG with both discrete and continuous true data. For the discrete data, SIG is set as $x \sim \po(\theta), ~\theta\sim p_{\phiv}(\theta)$ where $p_{\phiv}(\theta)$ is implicit distribution generated by mapping from ten dimensional random noise with a two-hidden-layer multi-layer perceptron (MLP). The top left figures correspond to $P_{data} = \mbox{NegativeBinomial}(r=2,p=0.5)=\int \mbox{Poisson}(\theta)\ga(\theta;2,1)d\theta$ and bottom left figures correspond to $P_{data} = \frac{1}{2}\mbox{Poisson}(10)+\frac{1}{2}\mbox{NegativeBinomial}(r=0.2,p=0.9)$. For continuous data, SIG is set as $\xv \sim \cN(\thetav,0.1^2I), \thetav\sim p_{\phiv}(\thetav)$, where the $p_{\phiv}(\thetav)$ is the same as that for the discrete cases.  As Figure~\ref{fig:toy} show,  the implicit distribution is able to recover the underlying mixing distribution such that the samples following the marginal distribution can well approximate the true data. Vanilla GAN, as comparison, can only generate samples whose similarity to true data is restricted by the discriminator and cannot recover the original data well. 
 
\begin{figure}[ht]
\centering
\includegraphics[width=0.9\textwidth]{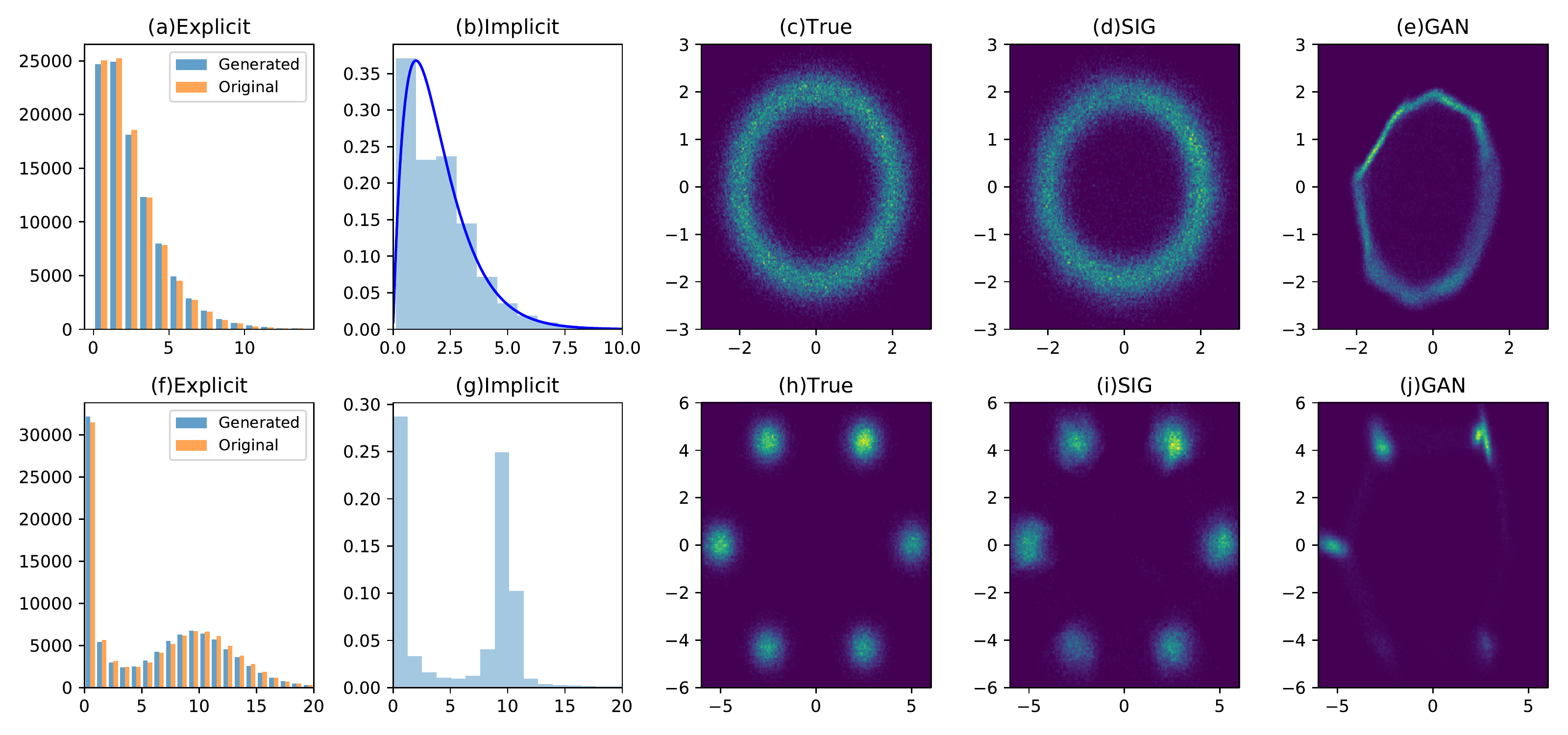}
\caption{Generated samples from SIG with true data coming from: (a)-(b): Negative-Binomial distribution; the implicit distribution can learn the true mixing distribution $\ga(\theta;2,1)$; (f)-(g): Mixture of Poisson and Negative-Binomial distribution; (c)-(e): Ring+Gaussian noise; (h)-(j): Gaussian mixture arranged on a ring.}
\label{fig:toy}
\end{figure}

\subsection{Mixture of Gaussians}
We compare different generative models on a $5\times5$ Gaussian mixture model. For fair comparison, all the models share the same generative network: a two-hidden-layer MLP with size 100 %
and rectified linear units (ReLU) activation function. The discriminator for GAN has a fully connected layer with size 100,  and the encoder for VAE and VEEGAN is a two-hidden-layer MLP with size 100.  

Detecting mode collapsing on a large dataset is challenging but can be accurately measured on synthetic data. To quantitatively evaluate the sample quality, we sample 50,000 points from trained generator and count it as high quality sample if it is within three standard deviations away from any of the mixture component centers. A center that is associated with %
more than 1000 high quality samples will be counted as a captured mode. The proportion of high quality samples at each mode, together with the proportion of low quality samples, form a 26 dimensional discrete distribution $P_g$. We calculated the KL divergence $\kl(P_g||P_{data})$. All results are reported based on the average and standard error of five independent random trials. %

\begin{figure}[ht]
\centering
\includegraphics[width=0.9\textwidth, height=6.5cm]{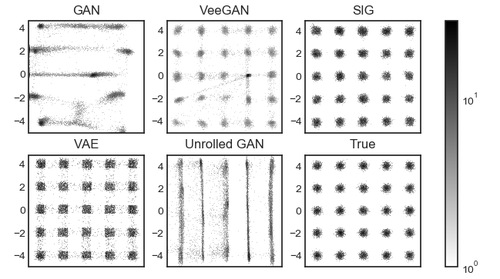}
\label{fig:gmm}
\caption{Comparison of generated sample for Gaussian mixture model by vanilla GAN, modified GAN to reduce mode collapsing(Unrolled GAN, VEEGAN), VAE and SIG.}
\end{figure}

\begin{table}[H]
\centering
\caption{Comparison of mode-capturing ability on mixture of Gaussian. 'Modes' is the number of captured modes out of 25. 'KL' is $\kl(P_{\text{model}}||P_{data})$. For 'Modes' and 'Proportion of high quality samples', the higher the better; for 'KL', the lower the better.}
\centerline{
\begin{tabular}{|c|c|c|c|c|c|}
\hline
 &GAN  & VAE & VEEGAN & Unrolled GAN&  SIG  \\ \hline 
Modes & 4.0$\pm$3.08   & 25$\pm$0 & 23.2$\pm$0.84 & 6.2$\pm$8.6 & \textbf{25$\pm$0} \\\hline
\makecell{Proportion of high\\ quality samples}	 & 0.36$\pm$0.16   & 0.83$\pm$0.02 & 0.82$\pm$0.03 & 0.42$\pm$0.13 & \textbf{0.91$\pm$0.04}  \\ \hline
KL & 2.87$\pm$0.78 & 0.32$\pm$0.07 & 0.38$\pm$0.08 & 1.97$\pm$0.60  & \textbf{0.14$\pm$0.07}  \\ \hline
\end{tabular}%
}
\label{tab:gmm}
\vspace{-3mm}
\end{table}

As shown in Table \ref{tab:gmm}, SIG captures all the modes and generates the highest proportion of high quality samples, whose %
distribution is closest to the ground truth. It also achieves the shortest running time and highest stability using a single neural network.

We notice, however, SIG generalization ability may not scale well with increasing data complexity, as shown in Figure \ref{fig:sig}. To generate natural images, we train SIG adversarially and notice the proposed GAN-SI can stabilize GAN training and mitigate the mode collapsing problem.

\begin{figure}[H]
\centering
\begin{tabular}{ccc}
\includegraphics[width=0.28\textwidth, height=3.7cm]{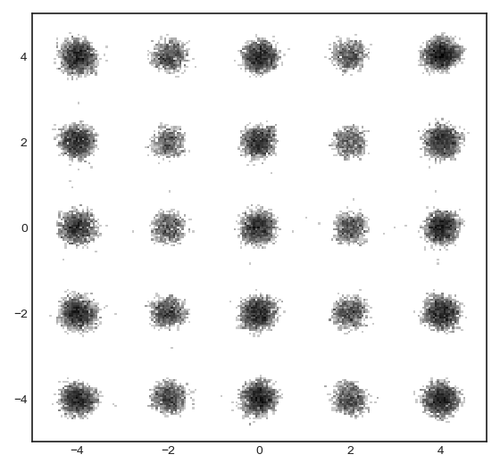}\hspace{-1em}&
\includegraphics[width=0.28\textwidth, height=3.7cm]{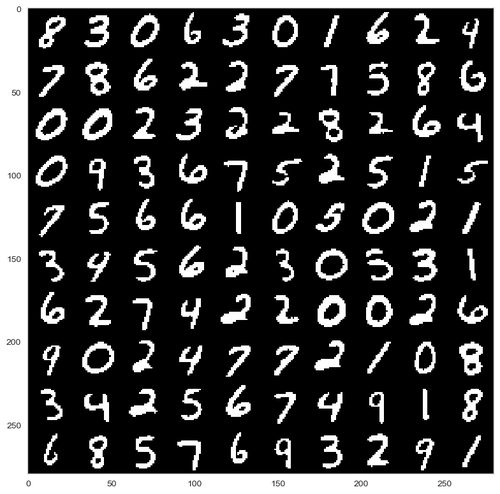}\hspace{-1em}&
\includegraphics[width=0.28\textwidth, height=3.7cm]{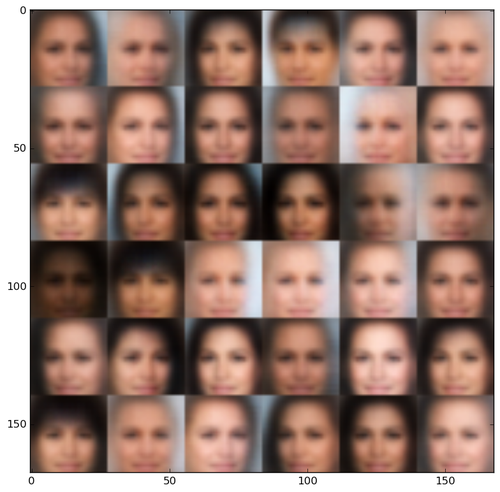}\hspace{-1em}\\
(a)&(b)& (c)
\end{tabular}
\caption{(a): SIG can generate low complexity data well. The input data is from unevenly distributed GMM, where the components in the 1st, 3rd, and 5th columns has twice more data than the 2nd and 4th. SIG generates samples well aligned with the true distribution. (b): SIG generated MNIST digits. (c): SIG scales not well when the data
is as complex as CelebA.}  %
\label{fig:sig}
\end{figure}

\subsection{Stacked MNIST}
To measure the performance of combining MLE and adversarial training schemes on discrete multimodal data, we stack 3 randomly chosen MNIST images on the RGB color channels to form a $28\times28\times3$ image (MNIST-3) \cite{srivastava2017veegan,metz2016unrolled,che2016mode,tolstikhin2017adagan}.  MNIST-3 contains 1000 modes corresponding to 3-digit between 0 and 999. Similar to \cite{metz2016unrolled} and \cite{tolstikhin2017adagan} we find the missing modes problem of GAN on MNIST3 is sensitive to the network architecture and the randomness in training process due to the instability. Therefore, we choose three different network sizes (denoted as S, M, and L), run each experiment five times and use exactly the same generator and discriminator for DCGAN and DCGAN-SI. 

The inception score (IS) \cite{salimans2016improved} is a widely used criterion for GAN evaluation. It is applied to data $\xv$ with label $y$ using a pre-trained classifier. Low entropy $H(y|x)$ of conditional distribution $p(y\given\xv)$ and high entropy $H(y)$ of marginal distribution $p(y)$ are considered to represent high image quality and diversity respectively. 
\ba{
\text{IS} = \exp(\bE[\kl(p(y\given\xv)||p(y))]) = \exp(H(y) - \bE[H(y|x)])
}

As the IS by itself cannot fully characterize generative model performance \cite{barratt2018note,borji2018pros}, we  provide more metrics for evaluation: High quality image means the proportion of images that can be classified by the trained classifier with a probability larger than 0.7; Mode is the number of digit triples that have at least one sample; KL is $\kl(P_g||P_{data})$ where $P_{data} = (\frac{1}{1000},\cdots,\frac{1}{1000})$. 

\begin{table}[H]
\footnotesize
\caption{High quality image and $\exp(H(y|x))$ reflect sample quality while $\exp(H(y))$, Mode and KL reflect sample diversity. For Inception score, High quality image, $\exp(H(y))$, higher is better; for $\exp(H(y|x))$ and KL, lower is better.}\vspace{-1.5mm}
\centering
\centerline{
\begin{tabular}{ccccccc}
\toprule
	 & IS & High quality  & $\exp(H(y|x))$ & $\exp(H(y))$ & Mode & KL \\ \midrule
	DCGAN(S) & 2.9$\pm$0.52 & \textbf{0.63$\pm$0.14} & \textbf{1.96$\pm$0.32} & 5.1$\pm$1.19 & 21.0$\pm$8.12 & 4.99$\pm$0.24 \\ %
	DCGAN-SI(S) & \textbf{4.33$\pm$0.59} & 0.6$\pm$0.07 & 2.05$\pm$0.2 & \textbf{8.78$\pm$0.41} & \textbf{279.2$\pm$296.52} & \textbf{2.63$\pm$1.0} \\ \midrule
	DCGAN(M) & 5.59$\pm$0.36 & 0.7$\pm$0.03 & 1.71$\pm$0.09 & 9.51$\pm$0.31 & 811.8$\pm$116.24 & 0.75$\pm$0.35 \\ %
	DCGAN-SI(M) & \textbf{5.93$\pm$0.47} & \textbf{0.72$\pm$0.04} & \textbf{1.65$\pm$0.11} & \textbf{9.75$\pm$0.11} & \textbf{969.0$\pm$29.19} & \textbf{0.3$\pm$0.13} \\ \midrule
	DCGAN(L) & 4.71$\pm$1.12 & 0.67$\pm$0.08 & 1.78$\pm$0.17 & 8.25$\pm$1.32 & 389.8$\pm$477.24 & 2.95$\pm$2.33 \\
		DCGAN-SI(L) & \textbf{6.05$\pm$0.68} & \textbf{0.73$\pm$0.06} & \textbf{1.62$\pm$0.17} & \textbf{9.75$\pm$0.12} & \textbf{957.0$\pm$32.74} & \textbf{0.36$\pm$0.12} \\ \bottomrule
\end{tabular}%
}
\vspace{-3mm}
\end{table}

\subsection{Sample quality and diversity on CIFAR10}
We test the semi-implicit regularizer on the CIFAR-10 dataset, a widely studied dataset consisting of 50,000 training images with $32\times32$ pixels from ten categories. The image diversity is high between or within each category. We combine semi-implicit regularizer with two popular GAN frameworks DCGAN  \cite{radford2015unsupervised} and WGAN-GP\cite{gulrajani2017improved} to balance the quality and diversity of generated samples. 

\begin{table}[h]
\small
\centering
\caption{Inception scores for models on CIFAR-10}
\resizebox{.8\textwidth}{!}{
\begin{tabular}{c|cccc}
\toprule 
\multirow{2}{*}{Real data} &  \multicolumn{4}{c}{Unsupervised, standard CNN} \\  \cline{2-5} 
& DCGAN & DCGAN-SI & WGAN-GP & WGAN-GP-SI \\\midrule
\multicolumn{1}{c}{11.24 $\pm$ .12} &6.16 $\pm$ .14 & 6.85$\pm$ .06  &  6.43 $\pm$ .07 & 6.67 $\pm$ .11\\
\bottomrule
\end{tabular}
}
\label{tab:cifar}
\end{table}

We train each model for 100K iterations with mini-batch size 64. The optimizer is Adam with learning rate 0.0002. The inception model we use is based on pre-trained Inception Model\cite{szegedy2016rethinking} on ImageNet. As shown in Appendix Figure \ref{fig:cifar}, the images generated by DCGAN include duplicated images indicating the existence of mode collapsing which does not seem to happen with regularized DCGAN-SI, and this is reflected in the improvement of inception score as shown in Table \ref{tab:cifar}.

\section{Conclusions}
We propose semi-implicit generator (SIG) as a flexible and stable generative model. Training under well-understood maximum likelihood framework, SIG is proposed either as a black-box generative model or as unbiased regularizer in adversarial learning. We analyze the inherent mode-capturing mechanism and show its advantage over several state-of-the-art generative methods in reducing mode collapse. Combined with GAN,  semi-implicit regularizer provides an interplay between adversarial learning and maximum likelihood inference, leading to a better balance between sample quality and diversity.

\clearpage
 \bibliographystyle{unsrt}
\bibliography{reference.bib,References052016.bib}
\clearpage

\newpage
\appendix
\begin{center}\Large{\bf{

 Semi-implicit generative model:
 supplementary material}}
\end{center}
\section{Proofs}
\begin{proof}[Proof of Lemma \ref{lemma:cross}]
Assume integer $K>M>0$ and $\cI$ is the set of all size $M$ subsets of $\{1,\cdots,K\}$. {Let $I$ be a discrete uniform random variable that takes outcome $\{i_1,\cdots, i_M\}$ in $\cI$ with probability $P(I = \{i_1,\cdots, i_M\}) = \frac{1}{\binom KM}$.} We have $\bE_I \frac{1}{M} \sum_{j \in I}  p(\xv|\thetav_j) = \frac{1}{K} \sum_{j=1}^K p(\xv|\thetav_j)$. Then we have
\bas{
\bH_K = &-\bE_{\xv \sim P_{data}(\xv)}\bE_{\thetav_1,\cdots,\thetav_K \sim p_{\phiv}(\thetav)}\log \frac{1}{K}\sum_{j=1}^K p(\xv|\thetav_j)\\
=&-\bE_{P_{data}}\bE_{\thetav_1,\cdots,\thetav_K \sim p_{\phiv} (\thetav)} \log \bE_{I} \frac{1}{M}\sum_{j \in I}  p(\xv|\thetav_j)  \\
\leq & -\bE_{P_{data}}\bE_{\thetav_1,\cdots,\thetav_K \sim p_{\phiv} (\thetav)}  \bE_{I} \log\frac{1}{M}\sum_{j \in I}  p(\xv|\thetav_j) \\
=& -\bE_{P_{data}}\bE_{\thetav_1,\cdots,\thetav_M \sim p_{\phiv} (\thetav)}  \log\frac{1}{M}\sum_{j =1}^M  p(\xv|\thetav_j) = \bH_M
}
By law of large number $\log\frac{1}{M}\sum_{j =1}^M  p(\xv|\thetav_j)$ converges to $\log \int p(\xv|\thetav)p(\thetav)d\thetav$ a.s.  as $M\to\infty$ so $\bH_M \to \bH({P_{data}},{P_{\text{model}}})$. When $M=1$, assume $\thetav^* = \argmin_{\thetav}-\bE_{P_{data}(\xv)}\log [p(\xv|\thetav)]$ then 
\bas{
-\bE_{P_{data}(\xv)}\log [p(\xv|\thetav)] \geq -\bE_{P_{data}(\xv)}\log [p(\xv|\thetav^*)]
}
Multiply both sides by $p_{\phiv}(\thetav)$ and integrate over $\thetav$, we have
\bas{
-\bE_{\thetav\sim p_{\phiv}(\thetav)}\bE_{P_{data}(\xv)}\log [p(\xv|\thetav)] \geq -\bE_{P_{data}(\xv)}\log [p(\xv|\thetav^*)]
}
The minimal is reached when implicit distribution degenerates to the point probability mass $p_{\phiv}(\thetav) = \delta_{\thetav^*}(\thetav)$ where $\theta^*$ maximizes average log-likelihood over data. 
\end{proof}

\begin{proof}[Proof of Theorem \ref{thm:mode}]
Suppose $P_{data}$ is defined on a discrete multi-modal space $\cX = \bigcup\limits_{i=1}^{K}U_i$. For $\xv_i \sim P_{data}, i=1,\cdots,N$, assume $\xv_i \in U_{t_i}$; for $\thetav_j \sim P_{\phiv}(\thetav), j=1,\cdots,M$, assume $\thetav_j \in U_{z_j}$, where $t_i$, $z_j$ denote the mode label of true data and generated data center respectively. Then we have $n_k = \sum_{i=1}^N \mathbbm{1}\{t_i = k\}$ and $m_k = \sum_{j=1}^M \mathbbm{1}\{z_j = k\}$ for $k=1,\cdots,K$. 
\ba{
\min_{\{m_1,\cdots, m_k\}} -\frac{1}{N}  \sum_{i=1}^{N}\log \frac{1}{M}  \sum_{j=1}^{M} \bE [p(\xv_i \given \thetav_j)]
\Leftrightarrow  \min_{\{m_1,\cdots, m_k\}} -  \sum_{i=1}^{N}\log  \sum_{j=1}^{M} \bE \exp^{-\frac{\norm{\xv_i-\thetav_j}^2}{2\sigma^2}}
\label{eq:sig3}
}
Notice $\sum_{k=1}^{K}n_k = N$, $\sum_{k=1}^{K}m_k = M$. By definition of $\cX$, if $\xv,\thetav \in U_k$, $r(\xv,\thetav) = \exp^{-\frac{\norm{\xv-\thetav}^2}{2\sigma^2}} \geq \exp^{-\frac{\epsilon_0^2}{2\sigma^2}}$ and if $\xv \in U_i, \thetav \in U_j~ i \neq  j$, $r(\xv,\thetav) = \exp^{-\frac{\norm{\xv-\thetav}^2}{2\sigma^2}} \leq \exp^{-\frac{c_0^2}{2\sigma^2}}$. With the definition of $u$ and $v$ in theorem \ref{thm:mode}, we have $u \geq \exp{-\frac{\epsilon_0^2}{2\sigma^2}} > -\frac{c_0^2}{2\sigma^2} \geq v$. Then we have objective (\ref{eq:sig3}) as a constrained optimization problem with Lagrange multiplier $\beta$
\bas{
\min_{\{m_1,\cdots, m_k\}} & - \sum_{i=1}^{N}\log  \sum_{j=1}^{M} \bE r(\xv_i,\thetav_j) + \beta(\sum_{k=1}^K m_k - M)\\
= & - \sum_{i=1}^{N}\log (m_{t_i}u + (M - m_{t_i})v) + \beta(\sum_{k=1}^K m_k - M) \\
= & - \sum_{k=1}^{K} n_k \log (m_k + Mv - m_kv) + \beta(\sum_{k=1}^K m_k - M) 
}
Taking the gradient with respect to $(m_1,\cdots, m_k)$ and set to zero gives
\bas{
&\frac{\partial}{\partial m_k} - \sum_{k=1}^{K} n_k \log (m_k + Mv - m_kv) + \beta(\sum_{k=1}^K m_k - M) \\
= & \frac{-n_k(u-v)}{m_k(u-v)+Mv} + \beta = 0,~~~ \text{for} ~~k=1, \cdots, K
}
Together with constraint $\sum_{k=1}^K m_k = M$, we have
\ba{
\frac{m^*_k}{M} = \frac{n_k}{N} + (\frac{n_k}{N} - \frac{1}{K})\frac{Kv}{(u-v)}
}
The Hessian $H = \text{diag}(\frac{n_k(u-v)^2}{(m_k(u-v)+M)^2})\succ 0$ shows convexity and $\frac{m^*_k}{M}$ is global minimum. Let the right-hand-side greater than 0, the condition for mode k not vanishing is $n_k > \frac{N}{K} \frac{1}{1+\frac{u-v}{Kv}}$
\end{proof}

\begin{proof}[Proof of Corollary \ref{cor:mode}]
Assume $\xv\sim \cN(\muv_x, \sigma^2I)$, $\thetav\sim \cN(\muv_{\theta}, \sigma^2I)$. Let $\zv = \frac{\xv - \thetav}{\sqrt{2}\sigma}$ and $\muv = \muv_x - \muv_\theta$ then $\zv \sim \cN(\frac{\muv}{\sqrt{2}\sigma}, I)$. Let $\chi = \zv^T\zv$, then $\chi$ follows noncentral chi-squared distribution $\chi \sim \chi(N, \lambda)$ where $N$ is the dimension of $\zv$, $\lambda=\frac{\muv^T\muv}{2\sigma^2}$ is noncentrality parameter. By moment genrating function (MGF) of noncentral chi-squared distribution, we have
\ba{
&\bE e^{-\frac{\norm{\xv-\thetav}^2}{2\sigma^2}} = \bE e^{-\zv^T\zv} \notag \\ 
=& \bE e^{-\chi} = MGF_{\chi}(-1) \notag \\
=& 3^{-N/2} e^{-\lambda/3}  \label{eq:mgf}
}
For $u$, $\muv=0$, $\lambda=0$ and for $v$, $\norm{\muv} = c\sigma$, $\lambda=\frac{c^2}{2}$. Plugging into \eqref{eq:mgf}, we have $u=3^{-N/2}$, $v=3^{-N/2}e^{-c^2/6}$,  therefore $\frac{v}{u-v} = \frac{1}{e^{c^2/6}-1}$.
\end{proof}

\section{Algorithm for GAN-SI}
\begin{algorithm}[ht] 
\While{not converged}{
Sample $\xv_i \sim P_{data}(\xv)$ for $i\in\{1,\cdots,N\}$ \; 
Sample $\zv_j \sim g(\zv)$ for $j \in \{1,\cdots,M\}$ \; 
Set $\thetav_j = g_{\phiv}(\zv_j)$ for all $j$ \; 

$g_{\gammav} \leftarrow -\nabla_{\gammav} \big[ \frac{1}{N} \sum_{i=1}^N\log D_{\gammav}(\xv_i) + \frac{1}{M} \sum_{j=1}^M \log(1-D_{\gammav}(\thetav_j))\big]$ \; 
$g_{\phiv} \leftarrow -\nabla_{\phiv} \big[ \frac{1}{M} \sum_{j=1}^M \log D_{\gammav}(\thetav_j) - \lambda \frac{1}{N} \sum_{i=1}^N \log \frac{1}{M} \sum_{j=1}^M p(\xv_i \given \thetav_j)\big] $\;
$\gammav \leftarrow \gammav- \eta g_{\gammav}$, $~~~\phiv \leftarrow \phiv- \eta g_{\phiv}$
}
The first order optimization is used as Adam\cite{kingma2014adam} in our experiments.
\caption{Mini-batch training of GAN-SI}\label{alg:sigan}
\end{algorithm}

\section{Network architecture and samples for MNIST-3}
The generator network is defined with parameter $K_g$ to adjust network size
\begin{table}[H]
\centering
\begin{tabular}{|c|c|c|c|c|}
\hline
 & Number of output & Kernel size & Stride& Padding \\  \hline
 Input $z\sim N(0,I_{256})$  & \multicolumn{4}{c|}{-} \\ \hline
 Fully connected & 4*4*64 & \multicolumn{3}{c|}{-} \\ \hline
 Transpose Convolution & 64*$K_g$ & 4& 1 & VALID \\ \hline
  Transpose Convolution & 32*$K_g$ & 4& 2 & SAME \\ \hline
    Transpose Convolution & 8*$K_g$ & 4& 1 & SAME \\ \hline 
    Convolution & 3 & 4& 2 & SAME \\ \hline
 \end{tabular}
\end{table}

The discriminator network is defined with parameter $K_d$ to adjust network size
\begin{table}[H]
\centering
\begin{tabular}{|c|c|c|c|c|}
\hline
 & Number of output & Kernel size & Stride& Padding \\  \hline
 Input is image batch with size 28*28*3  & \multicolumn{4}{|c|}{-} \\ \hline
 Transpose Convolution & 8*$K_d$ & 4& 2 & VALID \\ \hline
  Transpose Convolution & 16*$K_d$ & 4& 2 & SAME \\ \hline
    Transpose Convolution & 32*$K_d$ & 4& 1 & SAME \\ \hline 
    Flat+Fully connected& 1 & \multicolumn{3}{c|}{-}  \\ \hline
 \end{tabular}
\end{table}

For the network work size denoted as (S), (M), (L), the ($K_g$,$K_d$)  pair is chosen as $(1,0.5)$, $(1,1)$, $(2,1)$ respectively.

\section{Additional figures}
\begin{figure}[ht]
\centering
\includegraphics[scale=0.6,height=6.5cm]{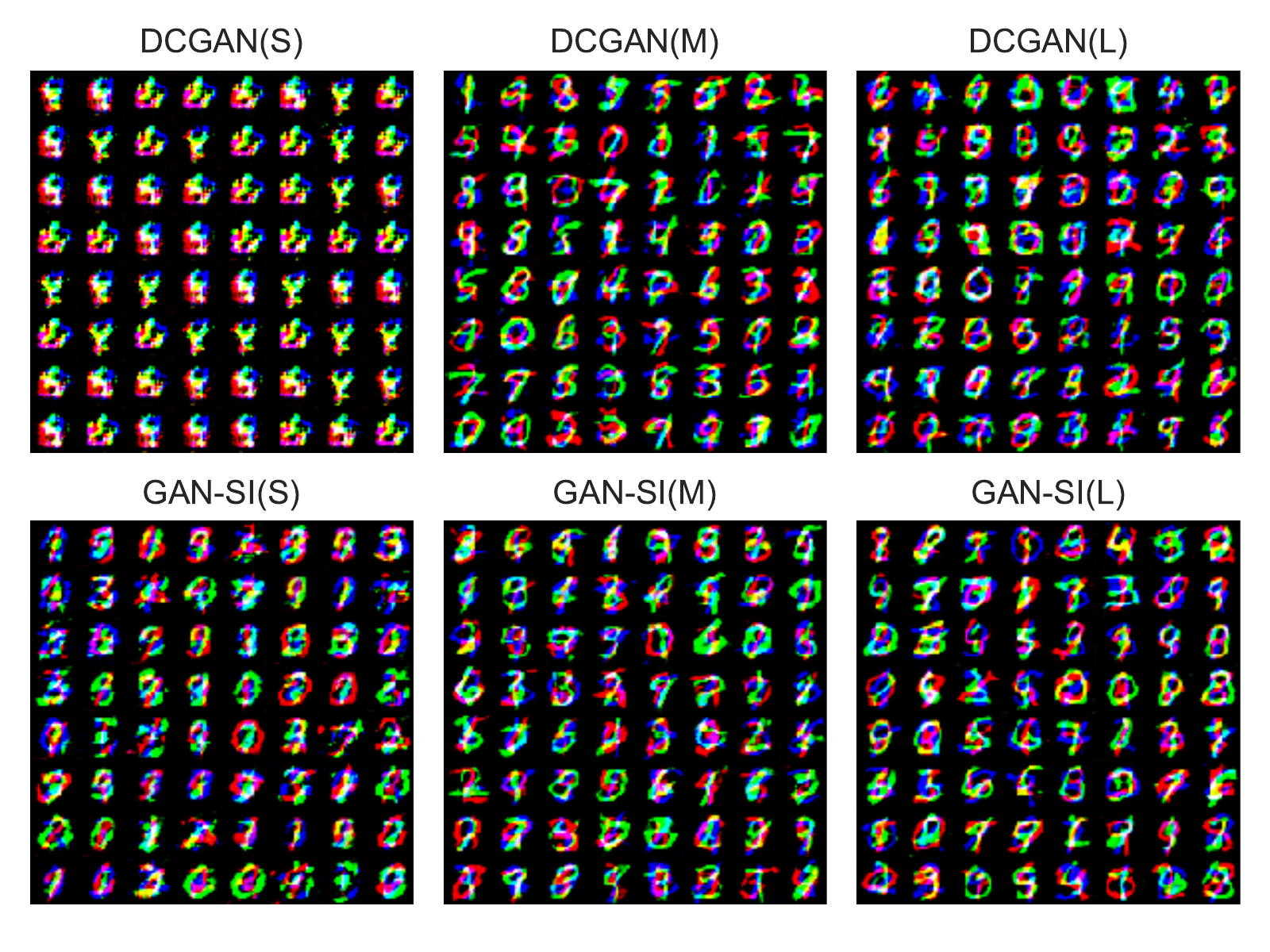}
\label{fig:mnist_good}
\caption{MNIST-3, highest inception score cases among 10 independent trials}
\end{figure}
\begin{figure}[ht]
\centering
\includegraphics[scale=0.6,height=6.5cm]{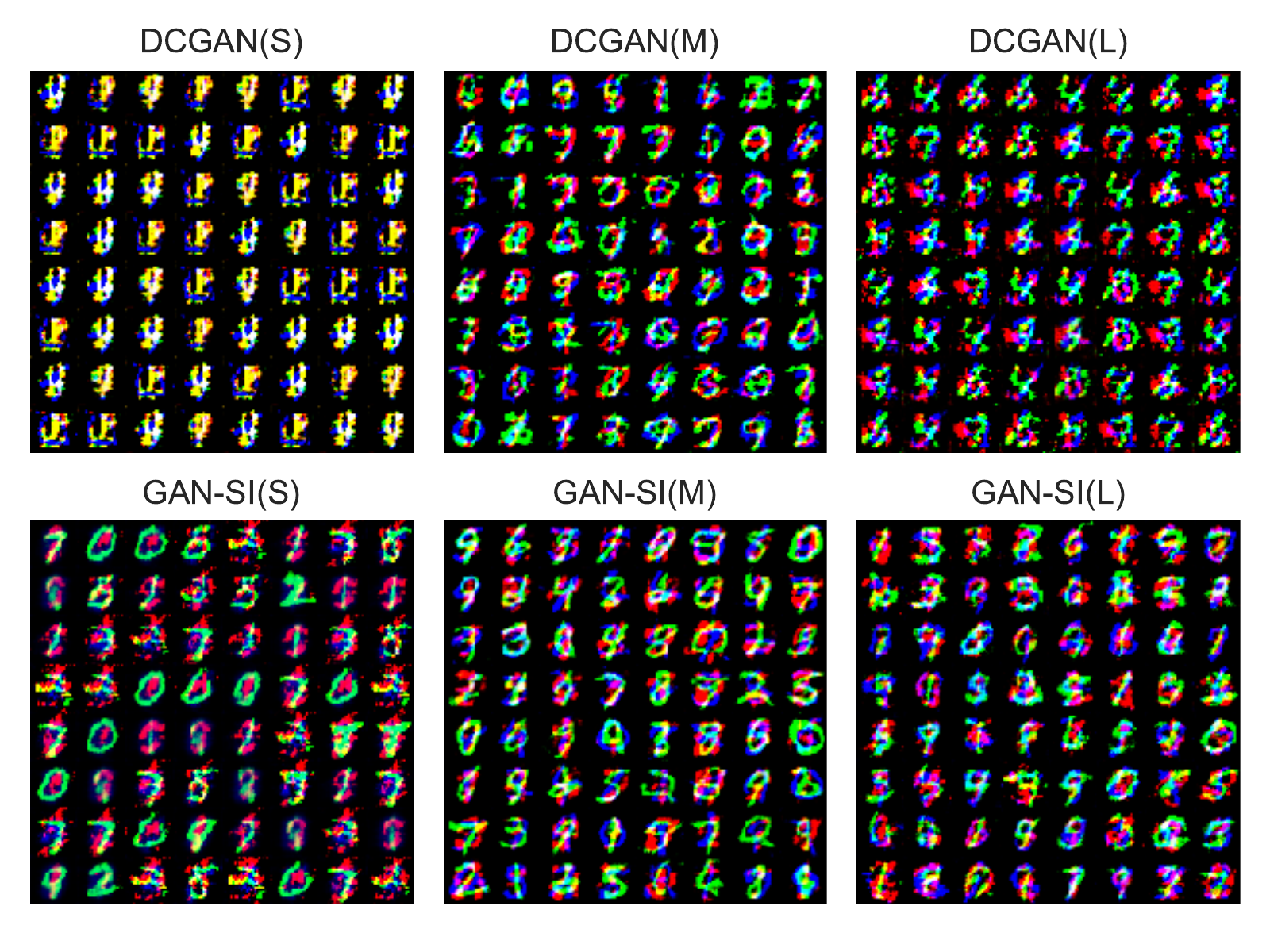}
\label{fig:mnist_bad}
\caption{MNIST-3, lowest inception score cases among 10 independent trials}
\end{figure}

\begin{figure}[t]
\centering
\begin{tabular}{c}
 \includegraphics[width=0.9\textwidth]{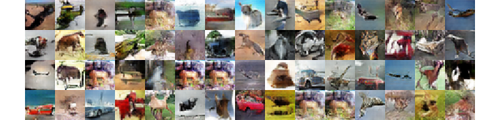}\\
\scriptsize (a)DCGAN \\ 
\includegraphics[width=0.9\textwidth]{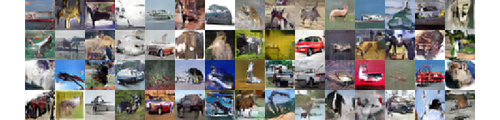}\\
\scriptsize (b)DCGAN-SI \\ 
\end{tabular}
\caption{ Randomly generated images by DCGAN and DCGAN with semi-implicit regularizer.}
\label{fig:cifar}
\end{figure}
\end{document}